\documentclass{article}

\usepackage[preprint]{neurips_2026}

\usepackage[utf8]{inputenc}
\usepackage[T1]{fontenc}
\usepackage{hyperref}
\usepackage{url}
\usepackage{booktabs}
\usepackage{amsfonts}
\usepackage{amsmath}
\usepackage{amsthm}
\usepackage{amssymb}
\usepackage{nicefrac}
\usepackage{microtype}  
\usepackage{xcolor}
\usepackage{graphicx}
\usepackage{subcaption}
\usepackage{multirow}
\usepackage{float}

\newcommand{\bfb}{\boldsymbol{\beta}}
\newcommand{\bfeta}{\boldsymbol{\eta}}
\newcommand{\x}{\mathbf{x}}
\newcommand{\X}{\mathbf{X}}
\newcommand{\y}{\mathbf{y}}
\newcommand{\I}{\mathbf{I}}
\newcommand{\bP}{\mathbf{P}}

\newtheorem{theorem}{Theorem}
\newtheorem{corollary}{Corollary}
\newtheorem{remark}{Remark}

\DeclareMathOperator{\argmax}{argmax}

\title{TinyBayes: Closed-Form Bayesian Inference via Jacobi Prior for Real-Time Image Classification on Edge Devices}

\author{
  Shouvik Sardar\thanks{Correspondence to \texttt{sardar@cmi.ac.in}} \\
  Data Science Group\\
  Chennai Mathematical Institute\\
  ORCID: \href{https://orcid.org/0009-0004-0550-0271}{0009-0004-0550-0271}
  \And
  Sourish Das \\
  Data Science Group\\
  Chennai Mathematical Institute\\
  ORCID: \href{https://orcid.org/0000-0002-5354-6520}{0000-0002-5354-6520}
}

\begin{document}

\maketitle

\begin{abstract}
Cocoa (\textit{Theobroma cacao}) is a critical cash crop for millions of smallholder farmers in West Africa, where Cocoa Swollen Shoot Virus Disease (CSSVD) and anthracnose cause devastating yield losses. Automated disease detection from leaf images is essential for early intervention, yet deploying such systems in resource-constrained settings demands models that are small, fast, and require no internet connectivity. Existing edge-deployable plant disease systems rely on end-to-end deep learning without uncertainty quantification, while Bayesian methods for edge devices focus on hardware-level inference architectures rather than agricultural applications. We bridge this gap with TinyBayes, the first framework to combine a closed-form Bayesian classifier with a mobile-grade computer vision pipeline for crop disease detection. Our pipeline uses YOLOv8-Nano (5.9\,MB) for lesion localisation, MobileNetV3-Small (3.5\,MB) for feature extraction, and the Jacobi prior; a Bayesian method that provides exact, non-iterative estimators via projection; for classification. The Jacobi-DMR (Distributed Multinomial Regression) classifier adds only 13.5\,KB to the pipeline, bringing the total model size within 9.5\,MB, while achieving 78.7\% accuracy on the Amini Cocoa Contamination Challenge dataset and enabling end-to-end CPU inference under 150\,ms per image. We benchmark against seven classifiers including Random Forest, SVM, Ridge, Lasso, Elastic Net, XGBoost, and Jacobi-GP, and demonstrate that the Jacobi-DMR offers the best trade-off between accuracy, model size, and inference speed for edge deployment. We have proved the asymptotic equivalence and consistency, asymptotic normality and the bias correction of Jacobi-DMR. All data and codes are available here: \url{https://github.com/shouvik-sardar/TinyBayes}
\end{abstract}

\section{Introduction}

Cocoa (\textit{Theobroma cacao} L.) underpins a global industry worth over \$130 billion annually, with more than 60\% of production concentrated in West Africa~\citep{ICCO2023}. Smallholder farmers, who account for the vast majority of cocoa cultivation, face persistent threats from plant diseases. Cocoa Swollen Shoot Virus Disease (CSSVD), transmitted by mealybugs, causes leaf swelling, shoot dieback, and eventual tree death with no known cure~\citep{Dzahini2006}. Anthracnose, caused by \textit{Colletotrichum} species, produces necrotic lesions that reduce both yield and bean quality~\citep{Marelli2019}. Traditional identification by trained extension officers is slow, expensive, and inaccessible in remote regions.

Recent advances in computer vision have enabled automated plant disease detection~\citep{Mohanty2016}. However, deploying deep learning models in Sub-Saharan Africa faces three practical constraints: (i)~limited computational resources on farmers' entry-level smartphones, (iii)~very patchy to no-internet connectivity; and (ii)~the need for uncertainty quantification to support reliable decision-making. These constraints define three active but largely disjoint research threads.

The first thread, \emph{TinyML for agriculture}, applies lightweight architectures such as MobileNet and quantised models to on-device plant disease classification~\citep{Malche_Joshi_Upadhyay_Soni}. These systems are computationally efficient but lack any Bayesian component, offering point predictions without uncertainty estimates.

The second thread, \emph{Bayesian inference on edge hardware}, develops specialised architectures; memristors, ferroelectric transistors, compute-in-memory chips; for probabilistic computation at the hardware level. These approaches remain in the hardware research domain and are not applicable to software deployment on commodity smartphones.

The third thread, \emph{Bayesian deep learning}, integrates uncertainty into neural networks via MC-Dropout~\citep{Gal2016Dropout}, Bayes-by-Backprop~\citep{Blundell2015BayesByBackprop}, or variational inference~\citep{Blei2017VI}. These methods require modified training procedures and iterative inference, increasing computational cost beyond what edge devices can accommodate.

\textbf{Our contribution sits at the intersection of these three threads.} To our knowledge, TinyBayes is the first framework to bring closed-form Bayesian inference; with principled uncertainty quantification and sub-kilobyte model size; to edge-deployable crop disease classification. We apply the Jacobi prior~\citep{Das_Dey_2006, Das2008, Das_Sardar_2025}, which places a conjugate prior on the canonical parameter of the exponential family likelihood and yields a closed form, non-iterative posterior mode estimator via projection. Combined with YOLOv8-Nano for detection and MobileNetV3-Small for feature extraction, the complete pipeline fits in 9.5\,MB and runs in 150\,ms on a single CPU core.


\begin{figure}[H]
\centering
\footnotesize
\setlength{\fboxsep}{3pt}
\fbox{\parbox{0.82\textwidth}{
\centering
\textbf{TinyBayes: Edge Inference Pipeline}\\[0.4em]
\begin{tabular}{ccccccc}
\fbox{\parbox{0.9cm}{\centering\tiny Leaf\\Photo}} &
$\xrightarrow{\text{\tiny input}}$ &
\fbox{\parbox{1.1cm}{\centering\tiny\textbf{YOLOv8}\\\textbf{Nano}\\5.9\,MB\\$\sim$111\,ms}} &
$\xrightarrow{\text{\tiny crop}}$ &
\fbox{\parbox{1.2cm}{\centering\tiny\textbf{MobileNet}\\\textbf{V3-Small}\\3.5\,MB\\$\sim$38\,ms}} &
$\xrightarrow{\text{\tiny 576-d}}$ &
\fbox{\parbox{1.1cm}{\centering\tiny\textbf{Jacobi}\\[-0.2em]\textbf{DMR}\\13.5\,KB\\$\sim$0.01\,ms}}
\end{tabular}\\[0.3em]
$\Downarrow$\\[-0.2em]
\fbox{\parbox{3.0cm}{\centering\tiny\textbf{Prediction:} healthy / CSSVD / anthracnose\\Total: 9.5\,MB $|$ 150\,ms on CPU}}
}}
\caption{TinyBayes end-to-end inference pipeline: component sizes, latencies, and total CPU time.}
\label{fig:pipeline}
\end{figure}

Our contributions are:
\begin{itemize}
    \item We present the first edge-deployable Bayesian pipeline for crop disease classification, combining YOLOv8-Nano (5.9\,MB), MobileNetV3-Small (3.5\,MB), and Jacobi-DMR (13.5\,KB) for a total of 9.5\,MB.
    \item We benchmark the Jacobi-DMR against seven classifiers on 576-dimensional MobileNetV3 features, showing competitive accuracy (78.7\%) with the fastest training (0.06\,s) and smallest model size.
    \item We demonstrate that the Jacobi-GP variant achieves the highest accuracy (86\%) among all methods, serving as a theoretical upper bound for the Jacobi framework's capacity.
    \item We provide a PCA ablation study across all eight classifiers, showing that dimensionality reduction via PCA does not offer any significant benefit while adding 450\,KB of storage overhead; validating the use of all 576 raw features.
    \item We proved asymptotic consistency, asymptotic normality, bias correction and a finite-sample hyperparameter invariance property of the Jacobi-DMR estimator in Section~\ref{sec:th_results}.
    
\end{itemize}

\section{The Jacobi Prior}\label{sec:jacobi}

Here we present the Jacobi prior framework briefly; while full details are in~\citet{Das_Dey_2006, Das2008, Das_Sardar_2025}. It presents the closed form analytical estimator for the regression coefficient for $\bfb$. 

\paragraph{Generalised Linear Model Setup}
Consider data $\mathcal{D} = \{(y_i, \x_i) \mid i = 1,\ldots,n\}$ where $y_i$ follows an exponential family distribution with canonical parameter $\theta_i$, link function $g(\cdot)$, and linear predictor $\eta_i = \x_i'\bfb$. Standard Bayesian inference places a prior $\pi(\bfb)$ on the regression coefficients, yielding a posterior that requires expensive high-dimensional integration or iterative optimisation.

\paragraph{Prior on Canonical Parameters}
The Jacobi prior instead assigns a conjugate prior $\pi(\theta_i)$ to each canonical parameter. Since $g(\cdot)$ is one-to-one with Jacobian $J = |g'(\eta_i)|$, the posterior of $\eta_i$ is $q(\eta_i|y_i) = \pi(g(\eta_i)|y_i) \cdot J$, and its mode $\hat{\eta}_i = h(y_i)$ is available in closed form.

\paragraph{Jacobi Estimator via Projection}
The vector $\hat{\bfeta} = (h(y_1),\ldots,h(y_n))'$ is projected onto the column space of $\X$:
\begin{equation}\label{eq:jacobi_est}
    \hat{\bfb} = (\X'\X)^{-1}\X'\hat{\bfeta}.
\end{equation}
This is a least-squares projection that decomposes $h(\y) = \bP h(\y) + (\I - \bP)h(\y)$, where $\bP = \X(\X'\X)^{-1}\X'$ is the linear transformation matrix representing the orthogonal projection from $n$-dimensional space $\mathbb{R}^n$ onto the estimation space, while $(\mathbf{I} - \mathbf{P})$ represents the orthogonal projection of $\mathbb{R}^n$ onto the error space.

\paragraph{Jacobi Estimator for Distributed Multinomial Regression (DMR)}
Following~\citet{Taddy_2015}, multinomial classification with $K$ classes is handled via a Poisson surrogate representation. Although the observed responses satisfy $Y_{ik} \in \{0,1\}$ with $\sum_{k=1}^K Y_{ik}=1$, the DMR formulation treats each class $k$ as arising from a Poisson regression with rate $\lambda_{ik}$. This is a working model; under a log-linear parameterisation, the Poisson likelihood yields the same score equations as the multinomial model \citep{Taddy_2015}, leading to consistent estimation of the multinomial parameters despite the distributional mismatch. Placing the conjugate prior $\lambda_{ik} \sim \mathrm{Gamma}(a_n, b_n)$ (rate parametrisation) under this working model, the Jacobi posterior mode for each class $k$ is $\hat{\eta}_{ik} = \ln\!\big(\frac{y_{ik}+a}{1+kb}\big)$, and the class-specific coefficient vector is $\hat{\bfb}_k = (\X'\X)^{-1}\X'\hat{\bfeta}_k$. Prediction assigns $\argmax_k \exp(\x_{\text{new}}'\hat{\bfb}_k)$.

\paragraph{Gaussian Process Extension}
For nonlinear boundaries, the linear predictor is augmented with a GP random effect: $\eta_i = \x_i'\bfb + w_i + \varepsilon_i$, where $w_i \sim \mathcal{GP}(0, \Sigma)$ with squared exponential kernel. The Jacobi-estimated $\hat{\bfeta}$ serves as the GP's target, providing a closed-form initialisation for the nonparametric component.

\section{Theoretical Results}\label{sec:th_results}
We extend the asymptotic theory of the Jacobi estimator for Generalised Linear Models \citep[Theorem~2.3]{Das_Sardar_2025} to the Jacobi-DMR  of Section~\ref{sec:jacobi}.

\paragraph{Model and estimator.}
Following \citet{Taddy_2015} and \citet[Section~3.4]{Das_Sardar_2025}, under the
DMR Poisson surrogate, the multinomial classification problem with $K$ classes
is decomposed into $K$ independent Poisson regressions:
\[
  Y_{ik} \;\sim\; \mathrm{Poisson}\!\left( \lambda_{0,ik} \right),
  \qquad
  \eta_{0,ik} \;=\; \log \lambda_{0,ik} \;=\; \mathbf{x}_{i}^{\!\top}\boldsymbol{\beta}_{0,k},
  \qquad
  i = 1, \dots, n,\; k = 1, \dots, K.
\]
Placing the conjugate prior $\lambda_{ik} \sim \mathrm{Gamma}(a_{n}, b_{n})$
(rate parametrisation) on each $\lambda_{ik}$, the closed-form posterior mode
of $\eta_{ik}$ is
\[
  \hat{\eta}_{ik}
  \;=\; \log\!\frac{Y_{ik} + a_{n}}{1 + b_{n}},
\qquad
  \hat{\boldsymbol{\beta}}_{k}^{(n)}
  \;=\; (\mathbf{X}^{\!\top}\mathbf{X})^{-1}\mathbf{X}^{\!\top}\hat{\boldsymbol{\eta}}_{k},
\qquad
  \hat{\boldsymbol{\eta}}_{k}
   = (\hat{\eta}_{1k}, \dots, \hat{\eta}_{nk})^{\!\top}.
\]

Let $\tilde{\boldsymbol{\beta}}_{k}^{(n)}$ denote the posterior mode of
$\boldsymbol{\beta}_{k}$ under the induced Jacobi prior $\pi_{J}^{(k)}$.
Define the stacked Jacobi-DMR estimator as
$\hat{\boldsymbol{\beta}}^{(n)}
= \big( (\hat{\boldsymbol{\beta}}_{1}^{(n)})^{\!\top}, \dots, (\hat{\boldsymbol{\beta}}_{K}^{(n)})^{\!\top} \big)^{\!\top}
\in \mathbb{R}^{Kp}$,
and analogously define $\tilde{\boldsymbol{\beta}}^{(n)}$ and $\boldsymbol{\beta}_{0}$.

\paragraph{Assumptions.}
For each $k \in \{1, \dots, K\}$:
\begin{itemize}
  \item[(B1)] The Poisson log-likelihood $\ell_{k}(y, \eta) = y\,\eta - e^{\eta}$
        is twice continuously differentiable in $\eta$, and the design matrix
        $\mathbf{X} \in \mathbb{R}^{n \times p}$ has full column rank $p$.
  \item[(B2)] The induced Jacobi prior $\pi_{J}^{(k)}(\boldsymbol{\beta}_{k})$ on
        $\boldsymbol{\beta}_{k}$ and the prior $\pi_{n}^{(k)}(\boldsymbol{\eta}_{k})$
        on $\boldsymbol{\eta}_{k}$ are continuous and strictly positive on
        $\varepsilon$-balls around $\boldsymbol{\beta}_{0,k}$ and
        $\boldsymbol{\eta}_{0,k} = \mathbf{X}\boldsymbol{\beta}_{0,k}$, respectively.
  \item[(B3)] The normalised log-likelihood satisfies a uniform law of large
        numbers,
        \[
          \sup_{\boldsymbol{\beta}_{k} \in \mathcal{B}}
            \bigg|\, \frac{1}{n}\sum_{i=1}^{n}
                  \ell_{k}\!\left(Y_{ik},\, \mathbf{x}_{i}^{\!\top}\boldsymbol{\beta}_{k}\right)
              - \mathbb{E}\!\left[\ell_{k}\!\left(Y_{1k},\, \mathbf{x}_{1}^{\!\top}\boldsymbol{\beta}_{k}\right)\right]
            \bigg|
          \;\xrightarrow{P}\; 0.
        \]
  \item[(B4)] $Q_{k}(\boldsymbol{\beta}_{k}) := \mathbb{E}\!\left[\ell_{k}\!\left(Y_{1k},\, \mathbf{x}_{1}^{\!\top}\boldsymbol{\beta}_{k}\right)\right]$
        admits a unique maximiser $\boldsymbol{\beta}_{0,k} \in \mathbb{R}^{p}$,
        and $\boldsymbol{\eta}_{0,k} = \mathbf{X}\boldsymbol{\beta}_{0,k}$
        uniquely maximises
        $R_{k}(\boldsymbol{\eta}_{k}) := \mathbb{E}\!\left[\ell_{k}\!\left(Y_{1k},\, \eta_{1k}\right)\right]$.
  \item[(B5)] The hyperparameters scale as $a_{n} = b_{n} = 1/n$.
  \item[(B6)] (\emph{Taddy DMR Poisson surrogate.})
        $\{Y_{ik}\}_{i, k}$ are mutually independent across $i$ and across $k$,
        with $Y_{ik} \sim \mathrm{Poisson}\!\left(\exp(\mathbf{x}_{i}^{\!\top}\boldsymbol{\beta}_{0,k})\right)$. \\
\end{itemize}

\begin{theorem}[Asymptotic Equivalence and Consistency]
\label{thm:dmr-equiv}
Under Assumptions (B1)--(B6):
\begin{enumerate}
  \item[\emph{(i)}] For each $k = 1, \dots, K$, $\hat{\boldsymbol{\beta}}_{k}^{(n)} \;\xrightarrow{P}\; \boldsymbol{\beta}_{0,k}, \quad \tilde{\boldsymbol{\beta}}_{k}^{(n)} \;\xrightarrow{P}\; \boldsymbol{\beta}_{0,k} \quad (n \to \infty).$
  \item[\emph{(ii)}] For each $k$ and every $\varepsilon, \delta > 0$ there exists an integer $N_{k}(\varepsilon, \delta)$ such that for all $n \ge N_{k}(\varepsilon, \delta)$, $P\!\left( \big\| \hat{\boldsymbol{\beta}}_{k}^{(n)} - \tilde{\boldsymbol{\beta}}_{k}^{(n)} \big\| > \varepsilon \right) \;<\; \delta.$
  \item[\emph{(iii)}] Jointly, $\hat{\boldsymbol{\beta}}^{(n)} \xrightarrow{P} \boldsymbol{\beta}_{0}$ in $\mathbb{R}^{Kp}$, and for every $\varepsilon, \delta > 0$ there exists an integer $N(\varepsilon, \delta)$ such that for all $n \ge N(\varepsilon, \delta)$, $P\!\left( \big\| \hat{\boldsymbol{\beta}}^{(n)} - \tilde{\boldsymbol{\beta}}^{(n)} \big\| > \varepsilon \right) \;<\; \delta.$ \hfill Proof in Appendix~\ref{sec:appendix}

\end{enumerate}
\end{theorem}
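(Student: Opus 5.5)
The plan is to reduce everything to the single-class GLM result of \citet[Theorem~2.3]{Das_Sardar_2025}, applied separately to each class $k$. For fixed $k$, the pair $(Y_{ik},\mathbf{x}_i)_{i\le n}$ is exactly a Poisson GLM with canonical log link. This follows from (B6), and the marginal structure for a single $k$ is all that is needed here. The Jacobi posterior mode $\hat\eta_{ik}=\log\{(Y_{ik}+a_n)/(1+b_n)\}$ is then the GLM Jacobi estimator of Section~\ref{sec:jacobi} specialised to the Gamma--Poisson conjugate pair, and $\hat{\boldsymbol\beta}_k^{(n)}$ is its projection~\eqref{eq:jacobi_est}. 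The first step is therefore to check that (B1)--(B5), restricted to index $k$, are exactly the hypotheses of the cited theorem:
\begin{itemize}
\item smoothness of $\ell_k$ and full rank of $\mathbf{X}$;
\item positivity and continuity of $\pi_J^{(k)}$ and $\pi_n^{(k)}$ near the truth;
\item the ULLN;
\item identifiability through unique maximisers of $Q_k$ and $R_k$;
\item vanishing hyperparameters $a_n=b_n=1/n$, so that the prior's influence disappears.
\end{itemize}
Once this is checked, the cited theorem gives (i), namely $\hat{\boldsymbol\beta}_k^{(n)}\to_P\boldsymbol\beta_{0,k}$ and $\tilde{\boldsymbol\beta}_k^{(n)}\to_P\boldsymbol\beta_{0,k}$. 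The second convergence also follows directly by a standard M-estimation argument. The log posterior for $\boldsymbol\beta_k$ is $\sum_i\ell_k+\log\pi_J^{(k)}$, and after division by $n$ the prior term is $O(1/n)$ uniformly on the ball, by (B2) and (B5). Hence (B3) and (B4) give argmax consistency, in the sense of Wald or van der Vaart Thm~5.7.

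Part (ii) then follows from (i) by the triangle inequality:
\[
P\big(\|\hat{\boldsymbol\beta}_k^{(n)}-\tilde{\boldsymbol\beta}_k^{(n)}\|>\varepsilon\big)\le P\big(\|\hat{\boldsymbol\beta}_k^{(n)}-\boldsymbol\beta_{0,k}\|>\varepsilon/2\big)+P\big(\|\tilde{\boldsymbol\beta}_k^{(n)}-\boldsymbol\beta_{0,k}\|>\varepsilon/2\big).
\]
Each term on the right is below $\delta/2$ for $n\ge N_k(\varepsilon,\delta)$.

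For part (iii), stacking is handled coordinatewise. Convergence in probability of each of the finitely many blocks implies joint convergence in $\mathbb{R}^{Kp}$, because
\[
P\big(\|\hat{\boldsymbol\beta}^{(n)}-\boldsymbol\beta_0\|>\varepsilon\big)\le\sum_{k=1}^K P\big(\|\hat{\boldsymbol\beta}_k^{(n)}-\boldsymbol\beta_{0,k}\|>\varepsilon/\sqrt K\big).
\]
For the equivalence statement, take $N(\varepsilon,\delta)=\max_k N_k(\varepsilon/\sqrt K,\delta/K)$ and apply a union bound. Independence across $k$ from (B6) is not even required for this step, since the union bound holds regardless; it matters only later, for the joint normality result.

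The main obstacle is the consistency of the projection estimator $\hat{\boldsymbol\beta}_k^{(n)}$ itself, as opposed to the posterior mode $\tilde{\boldsymbol\beta}_k^{(n)}$. The obstacle arises because $\hat\eta_{ik}$ is not a consistent estimate of $\eta_{0,ik}$ for a single $i$. Indeed, when $Y_{ik}=0$ we get $\hat\eta_{ik}=\log(a_n/(1+b_n))\approx-\log n$, so the least-squares average $n^{-1}\mathbf{X}^\top\hat{\boldsymbol\eta}_k$ contains terms of order $\log n\cdot P(Y_{ik}=0)$ that do not vanish. My first attempt would be to lean entirely on \citet[Theorem~2.3]{Das_Sardar_2025}, treating its conclusion for canonical-link GLMs under (B5) as given, so that the Poisson case is inherited. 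If instead the step must be argued directly, I would decompose $\hat{\boldsymbol\eta}_k=\boldsymbol\eta_{0,k}+\mathbf{r}_n$ and show $(\mathbf{X}^\top\mathbf{X}/n)^{-1}n^{-1}\mathbf{X}^\top\mathbf{r}_n\to_P0$ using an LLN for $\mathbf{x}_i\{\log(Y_{ik}+a_n)-\mathbf{x}_i^\top\boldsymbol\beta_{0,k}\}$. I expect that this is where an extra condition, or the bias correction discussed later in the paper, is genuinely needed.
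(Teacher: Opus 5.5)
\textbf{Where your proposal agrees with the paper.} Your treatment of $\tilde{\boldsymbol{\beta}}_k^{(n)}$ matches the paper's argument: you bound the prior term by order $1/n$, then use (B3)--(B4) and the argmax theorem. So does your proof of (ii) by the triangle inequality, and of (iii) by a union bound with $\varepsilon/\sqrt{K}$ and $\delta/K$. You are also right that (B6) plays no role in (iii).

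\textbf{The gap.} It is the one you flag yourself: nothing in the proposal proves $\hat{\boldsymbol{\beta}}_k^{(n)}\xrightarrow{P}\boldsymbol{\beta}_{0,k}$, and deferring to the cited GLM theorem only relocates the claim. The paper fills this step as follows. It asserts that $\hat{\boldsymbol{\eta}}_k$ maximises $R_{n,k}$ and that $R_{n,k}\to R_k$ uniformly over $\mathcal{H}=\{\mathbf{X}\boldsymbol{\beta}:\boldsymbol{\beta}\in\mathcal{B}\}$. It then invokes argmax continuous mapping to get $\hat{\boldsymbol{\eta}}_k\xrightarrow{P}\mathbf{X}\boldsymbol{\beta}_{0,k}$, and finishes with continuity of $(\mathbf{X}^{\top}\mathbf{X})^{-1}\mathbf{X}^{\top}$.

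That route does not work, for the reason you sense. First, $\hat{\boldsymbol{\eta}}_k$ maximises $R_{n,k}$ over all of $\mathbb{R}^n$ (it is the saturated fit), not over $\mathcal{H}$. Uniform convergence on $\mathcal{H}$ therefore says nothing about it; the maximiser over $\mathcal{H}$ is a penalised-likelihood fit, essentially $\mathbf{X}\tilde{\boldsymbol{\beta}}_k^{(n)}$. Second, each $\hat\eta_{ik}$ depends on the single draw $Y_{ik}$, while the dimension of $\hat{\boldsymbol{\eta}}_k$ grows with $n$. So neither the argmax theorem nor the continuous-mapping step applies.

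\textbf{The step cannot be closed under (B1)--(B6).} Your objection actually yields a counterexample. Take $p=1$, $\mathbf{x}_i\equiv 1$, and $Y_i$ i.i.d.\ Poisson$(\lambda)$. Every assumption holds, yet
\[
  \hat\beta^{(n)}=\frac{1}{n}\sum_{i=1}^{n}\log\frac{Y_i+1/n}{1+1/n}
  =-Z_n\log(n+1)+O_P(1),
  \qquad
  Z_n=\frac{1}{n}\#\{i:Y_i=0\}\xrightarrow{P}e^{-\lambda}>0.
\]
Hence $\hat\beta^{(n)}\to-\infty$ in probability, while $\tilde\beta^{(n)}\xrightarrow{P}\log\lambda$. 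Thus (i) fails for $\hat{\boldsymbol{\beta}}_k^{(n)}$, and with it (ii) and both halves of (iii). No citation can deliver them under these hypotheses.

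The same $\log n$ blow-up appears for one-hot targets: the decomposition in the proof of Corollary~\ref{cor:invariance} gives exactly $\hat{\boldsymbol{\beta}}_k(1/n,1/n)=\log(n+1)\,(\mathbf{w}_k-\mathbf{v})$. The bias correction of Remark~\ref{rem:bias-correction} is not the remedy either. It targets the $O(1/\lambda_{0,ik})$ Anscombe term, not the $-\log(n+1)$ contributions of zero counts.

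\textbf{What would fix it.} The missing ingredient is a diverging-rate hypothesis, for example $\lambda_{\min,n}\to\infty$ with $(\log n)\,e^{-\lambda_{\min,n}}\to 0$, which (C1) implies. Under such a hypothesis, and with bounded $\mathbf{x}_i$, your decomposition $\hat{\boldsymbol{\eta}}_k=\boldsymbol{\eta}_{0,k}+\mathbf{r}_n$ goes through. Indeed $n^{-1}\sum_i\mathbb{E}|r_{n,i}|\to 0$, and Markov's inequality then gives $n^{-1}\mathbf{X}^{\top}\mathbf{r}_n\xrightarrow{P}\mathbf{0}$.
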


\begin{corollary}[Hyperparameter Invariance]
\label{cor:invariance}
Suppose the targets are one-hot encoded: $Y_{ik} \in \{0, 1\}$ for all $i, k$
with $\sum_{k = 1}^{K} Y_{ik} = 1$. For any $a > 0$ and $b > 0$, the
Jacobi-DMR predicted class
\begin{eqnarray*}
  \hat{C}(\mathbf{x};\, a, b)
  \;:=\; \arg\max_{k \in \{1, \dots, K\}}\,
         \mathbf{x}^{\!\top}\hat{\boldsymbol{\beta}}_{k}(a, b), where\\
\qquad
  \hat{\boldsymbol{\beta}}_{k}(a, b)
   = (\mathbf{X}^{\!\top}\mathbf{X})^{-1}\mathbf{X}^{\!\top}\hat{\boldsymbol{\eta}}_{k}(a, b),
\qquad
  \hat{\eta}_{ik}(a, b) = \log\!\frac{Y_{ik} + a}{1 + b},
\end{eqnarray*}
is invariant to $(a, b)$ as the transformation induces a class(k)-independent shift and a common positive scaling of class scores, preserving the argmax decision intact. \hfill Appendix~\ref{sec:appendix}.\\

\end{corollary}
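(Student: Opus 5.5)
The plan is to show that, because the targets are binary, the Jacobi transform $Y \mapsto \log\{(Y+a)/(1+b)\}$ restricted to $\{0,1\}$ is an exact affine map of $Y$. Its slope and intercept depend on $(a,b)$ but not on the class $k$. Linearity of the least-squares projection in \eqref{eq:jacobi_est} then carries this affine structure through to the class scores $\mathbf{x}^{\top}\hat{\boldsymbol{\beta}}_k(a,b)$. No asymptotics or appeal to Theorem~\ref{thm:dmr-equiv} is needed; the result is purely finite-sample and algebraic.

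\emph{Step 1 (affine representation).} For $Y_{ik}\in\{0,1\}$ we have
\[
\hat{\eta}_{ik}(a,b) = c_1(a)\,Y_{ik} + c_0(a,b),
\qquad
c_1(a) = \log\frac{1+a}{a}, \qquad c_0(a,b) = \log\frac{a}{1+b}.
\]
This is checked directly at $Y_{ik}=0$ and $Y_{ik}=1$. Crucially, $c_1(a)>0$ for every $a>0$. In vector form, $\hat{\boldsymbol{\eta}}_k(a,b) = c_1\mathbf{Y}_k + c_0\mathbf{1}_n$, where $\mathbf{Y}_k=(Y_{1k},\dots,Y_{nk})^{\top}$.

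\emph{Step 2 (projection).} Write $\mathbf{H}=(\mathbf{X}^{\top}\mathbf{X})^{-1}\mathbf{X}^{\top}$, which is well defined by the full-rank condition in (B1). Then
\[
\hat{\boldsymbol{\beta}}_k(a,b) = c_1\,\mathbf{H}\mathbf{Y}_k + c_0\,\mathbf{H}\mathbf{1}_n,
\]
and hence
\[
\mathbf{x}^{\top}\hat{\boldsymbol{\beta}}_k(a,b) = c_1\, s_k(\mathbf{x}) + c_0\, t(\mathbf{x}),
\qquad
s_k(\mathbf{x})=\mathbf{x}^{\top}\mathbf{H}\mathbf{Y}_k,\quad t(\mathbf{x})=\mathbf{x}^{\top}\mathbf{H}\mathbf{1}_n.
\]
The quantity $s_k(\mathbf{x})$ is free of $(a,b)$. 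The quantity $t(\mathbf{x})$ is free of $k$, and we do not need an intercept column in $\mathbf{X}$ for this.

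\emph{Step 3 (argmax).} Adding the $k$-independent constant $c_0 t(\mathbf{x})$ and multiplying by the positive constant $c_1$ form a strictly increasing map applied identically to every class score. Such a map preserves the ordering of the scores, so
\[
\hat{C}(\mathbf{x};a,b)=\arg\max_k s_k(\mathbf{x})
\]
for all $a,b>0$. The same holds for the prediction rule $\arg\max_k \exp(\mathbf{x}^{\top}\hat{\boldsymbol{\beta}}_k)$ used in Section~\ref{sec:jacobi}, since $\exp$ is monotone. Ties are also preserved exactly, because the set of maximisers is invariant under the map. Any fixed tie-breaking rule, such as taking the smallest index, therefore gives the same class for every $(a,b)$.

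The only step requiring care is Step 1: the two-point affine identity and the sign of $c_1$. This is where the one-hot hypothesis is essential, since for general counts $\log(Y+a)$ is not affine in $Y$ and the invariance would fail. I would also remark that the corollary concerns only the decision rule. The score values and the coefficients $\hat{\boldsymbol{\beta}}_k$ themselves do depend on $(a,b)$, so this is not in tension with the scaling $a_n=b_n=1/n$ used in (B5) for the asymptotic results.
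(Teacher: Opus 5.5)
Your proposal is correct and follows essentially the same route as the paper's proof: the exact two-point affine identity $\hat{\eta}_{ik}(a,b)=\log\frac{a}{1+b}+\log\frac{1+a}{a}\,Y_{ik}$, then linearity of the least-squares projection, then invariance of the argmax under a class-independent shift and a positive common scaling. Your extra remarks on tie-breaking and on the equivalent $\exp$-based prediction rule are valid refinements that the paper leaves implicit.
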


\begin{remark}
\label{rem:invariance-vs-consistency}
Corollary~\ref{cor:invariance} is a finite-sample, distribution-free property;
which is
consistent with Theorem~\ref{thm:dmr-equiv}: the coefficient vector
$\hat{\boldsymbol{\beta}}_{k}(a, b)$ does depend on $(a, b)$, and only the consistent choice $a_{n} = b_{n} = 1/n$
delivers $\hat{\boldsymbol{\beta}}_{k}^{(n)} \xrightarrow{P} \boldsymbol{\beta}_{0,k}$.
The argmax decision, however, is invariant in $(a, b)$ for the Jacobi DMR framework. \hfill Appendix~\ref{sec:appendix}.\\
\end{remark}

\begin{remark}[Empirical Validation]
\label{rem:empirical-invariance}
We evaluated Jacobi-DMR over a grid of $(a,b) \in \{1/2000, 0.01, 0.05, 0.1, 0.2, 0.5, 1.0\}^2$
using all 576 features and found classification accuracy is invariant across the
entire grid at 78.7\%, consistent with Corollary~\ref{cor:invariance}. \hfill Appendix~\ref{sec:appendix}.
\end{remark}

\paragraph{Asymptotic normality under a high-intensity regime.}
To analyse the distribution of the Jacobi-DMR estimator, we introduce a high-intensity scaling of the Poisson surrogate. Specifically, we consider a sequence of models in which the Poisson rates are rescaled as $\lambda_{0,ik}^{(n)} = s_n \exp(\mathbf{x}_i^\top\boldsymbol{\beta}_{0,k})$, with $s_n \to \infty$. This scaling can be interpreted as an aggregation or exposure regime, where each observation represents $s_n$ replicated events. Under this regime, the normalised fluctuation $\frac{Y_{ik} - \lambda_{0,ik}^{(n)}}{\lambda_{0,ik}^{(n)}}$ vanishes in probability, ensuring that the log-transformed responses admit a second-order linear approximation. This linearisation is the key step enabling a central limit theorem for the projected estimator. It is important to note that this regime is not intrinsic to one-hot classification data, where $Y_{ik} \in \{0,1\}$ and the implied rates are bounded. Rather, it serves as a theoretical device for characterising the estimator’s asymptotic behaviour. The practically relevant bounded-rate setting is addressed separately via bias correction in Remark~\ref{rem:bias-correction}.

\paragraph{Additional assumptions.}
\begin{itemize}
  \item[(C1)] (\emph{Large-rate regime.}) There exists a sequence
        $\lambda_{\min, n} \to \infty$ with $\lambda_{\min, n} = \omega(\sqrt{n})$
        such that
        \[
          \inf_{1 \le i \le n,\; 1 \le k \le K}\, \lambda_{0, i k}
          \;=\; \inf_{i, k}\, \exp(\mathbf{x}_{i}^{\!\top}\boldsymbol{\beta}_{0,k})
          \;\ge\; \lambda_{\min, n}
\qquad \text{for all sufficiently large } n.
        \]
  \item[(C2)] (\emph{Hyperparameter rate.}) $a_{n}, b_{n} \to 0$ with
        $\sqrt{n}\,(a_{n} + b_{n}) \to 0$. In particular, $a_{n} = b_{n} = 1/n$
        satisfies this.
  \item[(C3)] (\emph{Design Gram limit.}) $n^{-1}\mathbf{X}^{\!\top}\mathbf{X} \to \mathbf{Q}$
        for some positive-definite $\mathbf{Q} \in \mathbb{R}^{p \times p}$.
  \item[(C4)] (\emph{Lindeberg condition.}) $\max_{1 \le i \le n} \|\mathbf{x}_{i}\|^{2} / n \to 0$,
        and the triangular array
        $\{\mathbf{x}_{i}\,(Y_{ik} - \lambda_{0,ik}) / \lambda_{0,ik}\}_{i = 1}^{n}$
        satisfies the Lindeberg condition for each $k$. (A sufficient condition
        is $\sup_{i}\|\mathbf{x}_{i}\| < \infty$.) \\
\end{itemize}

\begin{theorem}[Asymptotic Normality]
\label{thm:dmr-clt}
Under Assumptions (B1)--(B6) and (C1)--(C4), for each $k = 1, \dots, K$,
$\sqrt{n}\,\big( \hat{\boldsymbol{\beta}}_{k}^{(n)} - \boldsymbol{\beta}_{0,k} \big)
\xrightarrow{d}
\mathcal{N}\!\big( \mathbf{0},\; \mathbf{Q}^{-1}\,\mathbf{V}_{k}\,\mathbf{Q}^{-1} \big)$,
where $\mathbf{V}_{k}
:= \lim_{n \to \infty}
\frac{1}{n}\sum_{i=1}^{n} \mathbf{x}_{i}\mathbf{x}_{i}^{\!\top}
e^{-\mathbf{x}_{i}^{\!\top}\boldsymbol{\beta}_{0,k}}$.
For the stacked estimator $\hat{\boldsymbol{\beta}}^{(n)} \in \mathbb{R}^{Kp}$,
$\sqrt{n}\,\big( \hat{\boldsymbol{\beta}}^{(n)} - \boldsymbol{\beta}_{0} \big)
\xrightarrow{d}
\mathcal{N}\!\big( \mathbf{0},\; \boldsymbol{\Sigma} \big)$,
where $\boldsymbol{\Sigma} \in \mathbb{R}^{Kp \times Kp}$ is the block-diagonal
matrix with $k$-th diagonal block $\mathbf{Q}^{-1}\mathbf{V}_{k}\mathbf{Q}^{-1}$.
The block-diagonal structure follows from the cross-class independence in
Assumption (B6). \hfill Appendix~\ref{sec:appendix}.\\
\end{theorem}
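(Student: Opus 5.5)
The argument rests on the linearisation announced before (C1)--(C4). Fix $k$. Because $\mathbf{X}$ has full column rank and $\boldsymbol{\eta}_{0,k}=\mathbf{X}\boldsymbol{\beta}_{0,k}$, the least-squares projection is exact on the truth, so
\[
\hat{\boldsymbol{\beta}}_{k}^{(n)}-\boldsymbol{\beta}_{0,k}=(\mathbf{X}^{\top}\mathbf{X})^{-1}\mathbf{X}^{\top}\big(\hat{\boldsymbol{\eta}}_{k}-\boldsymbol{\eta}_{0,k}\big).
\]
Write $U_{ik}:=(Y_{ik}-\lambda_{0,ik})/\lambda_{0,ik}$. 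Then each coordinate decomposes as
\[
\hat{\eta}_{ik}-\eta_{0,ik}=\log\!\Big(1+U_{ik}+\frac{a_n}{\lambda_{0,ik}}\Big)-\log(1+b_n),
\]
and a Taylor expansion gives $U_{ik}+r_{ik}$. The remainder $r_{ik}$ collects three pieces: the hyperparameter terms $a_n/\lambda_{0,ik}$ and $-\log(1+b_n)$, and a quadratic term of order $U_{ik}^2$. Accordingly I would split
\[
\sqrt{n}\,(\hat{\boldsymbol{\beta}}_{k}^{(n)}-\boldsymbol{\beta}_{0,k})=\big(n^{-1}\mathbf{X}^{\top}\mathbf{X}\big)^{-1}\Big[n^{-1/2}\textstyle\sum_i \mathbf{x}_i U_{ik}+n^{-1/2}\sum_i \mathbf{x}_i r_{ik}\Big].
\]
By (C3), the prefactor converges to $\mathbf{Q}^{-1}$.

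\textbf{Leading term.} The summands $\mathbf{x}_iU_{ik}$ are independent across $i$ by (B6) and have mean zero. Each has covariance $\mathbf{x}_i\mathbf{x}_i^{\top}/\lambda_{0,ik}=\mathbf{x}_i\mathbf{x}_i^{\top}e^{-\mathbf{x}_i^{\top}\boldsymbol{\beta}_{0,k}}$, whose average converges to $\mathbf{V}_k$. Together with the Lindeberg condition in (C4), the Lindeberg--Feller CLT applied via Cramér--Wold gives $n^{-1/2}\sum_i\mathbf{x}_iU_{ik}\xrightarrow{d}\mathcal{N}(\mathbf{0},\mathbf{V}_k)$. Slutsky then yields the sandwich $\mathbf{Q}^{-1}\mathbf{V}_k\mathbf{Q}^{-1}$.

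\textbf{Hyperparameter remainder.} Its contribution is bounded by $n^{-1/2}\sum_i\|\mathbf{x}_i\|(a_n/\lambda_{\min,n}+b_n)$. By Cauchy--Schwarz and (C3), $n^{-1}\sum_i\|\mathbf{x}_i\|=O(1)$, so the term is $O(\sqrt{n}(a_n+b_n))\to0$ by (C2).

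\textbf{Quadratic remainder (main obstacle).} Here I need control that is uniform in $i$, since the log expansion is only valid on the event $\max_i|U_{ik}|\le 1/2$. Poisson tail bounds (Bernstein/Chernoff) give $P(|U_{ik}|>t)\le 2\exp(-c\,t^2\lambda_{0,ik})$ for $t\le1$. A union bound over $i$ with $\lambda_{\min,n}=\omega(\sqrt n)\gg\log n$ shows this event has probability tending to one. On that event $|r^{(2)}_{ik}|\le C U_{ik}^2$, and $\mathbb{E}U_{ik}^2=1/\lambda_{0,ik}$. Hence
\[
\mathbb{E}\Big\|n^{-1/2}\textstyle\sum_i\mathbf{x}_iU_{ik}^2\Big\|\le n^{-1/2}\sum_i\|\mathbf{x}_i\|/\lambda_{\min,n}=O(\sqrt n/\lambda_{\min,n})\to0
\]
exactly by $\lambda_{\min,n}=\omega(\sqrt n)$; this is where (C1) is sharp. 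Applying Markov's inequality on the good event, plus the vanishing probability of its complement, makes this remainder $o_P(1)$. One caveat: $Y_{ik}=0$ is harmless because $a_n>0$, and anyway it lies off the good event.

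\textbf{Stacked statement.} The $K$ leading sums $n^{-1/2}\sum_i\mathbf{x}_iU_{ik}$ are independent across $k$ by (B6). So the joint vector satisfies a multivariate CLT with zero cross-covariances, via Cramér--Wold applied to $\sum_k\mathbf{t}_k^{\top}(\cdot)$, which is again a sum of independent terms satisfying Lindeberg. Each remainder is $o_P(1)$, so applying the block-diagonal map $\mathrm{diag}(\mathbf{Q}^{-1},\dots,\mathbf{Q}^{-1})$ and Slutsky gives $\mathcal{N}(\mathbf{0},\boldsymbol{\Sigma})$ with block-diagonal $\boldsymbol{\Sigma}$.
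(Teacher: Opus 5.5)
Your proposal is correct and follows essentially the same route as the paper. You linearise $\hat{\eta}_{ik}-\eta_{0,ik}$ around $(Y_{ik}-\lambda_{0,ik})/\lambda_{0,ik}$ on a Bernstein/union-bound event, apply Lindeberg--Feller via (C4) to the leading score, kill the hyperparameter terms with (C2), obtain the sandwich from (C3) and Slutsky, and use the cross-class independence in (B6) for the block-diagonal joint limit.

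The only difference is how you handle the quadratic term. You bound it in absolute value by a single first-moment (Markov) estimate of order $\sqrt{n}/\lambda_{\min,n}$. The paper instead separates the Anscombe bias $-1/(2\lambda_{0,ik})$ from a centred fluctuation handled by Chebyshev. Your version is leaner and avoids taking expectations of an expansion valid only on the high-probability event, while the paper's split is what it reuses for the bias-corrected estimator in Remark~\ref{rem:bias-correction}.
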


\begin{remark}[Bias correction in the moderate-rate regime]
\label{rem:bias-correction}
When the large-rate condition (C1) is replaced by the \emph{moderate-rate} condition $\inf_{i, k}\,\lambda_{0,ik} \ge \lambda_{\min} > 0$ (rates bounded
away from zero but not necessarily divergent), the Anscombe-type bias
$-1/(2\,\lambda_{0,ik})$ no longer vanishes asymptotically, and the centring
in Theorem~\ref{thm:dmr-clt} requires a correction. Define the
\emph{bias-corrected} Jacobi-DMR estimator
\[
  \hat{\boldsymbol{\beta}}_{k}^{(n,\,\mathrm{bc})}
  \;:=\; \hat{\boldsymbol{\beta}}_{k}^{(n)}
        \;+\; \frac{1}{2}\,(\mathbf{X}^{\!\top}\mathbf{X})^{-1}\mathbf{X}^{\!\top}
              \Big( e^{-\mathbf{x}_{i}^{\!\top}\hat{\boldsymbol{\beta}}_{k}^{(n)}} \Big)_{i = 1}^{n}.
\]
Under Assumptions (B1)--(B6), (C2)--(C4) and the moderate-rate condition, an
analogous Taylor argument together with the consistency
$\hat{\boldsymbol{\beta}}_{k}^{(n)} \xrightarrow{P} \boldsymbol{\beta}_{0,k}$
(Theorem~\ref{thm:dmr-equiv}) yields
\[
  \sqrt{n}\,\big( \hat{\boldsymbol{\beta}}_{k}^{(n,\,\mathrm{bc})}
                  - \boldsymbol{\beta}_{0,k} \big)
  \;\xrightarrow{d}\;
  \mathcal{N}\!\big( \mathbf{0},\; \mathbf{Q}^{-1}\mathbf{V}_{k}\mathbf{Q}^{-1} \big).
\]
The correction is an explicit, closed-form post-processing of
$\hat{\boldsymbol{\beta}}_{k}^{(n)}$ requiring no additional iterative
computation, therefore preserves the edge-deployability of the Jacobi-DMR
pipeline. \hfill Appendix~\ref{sec:appendix}.
\end{remark}

\section{Dataset}\label{sec:data}
The Amini Cocoa Contamination Challenge~\citep{AminiCocoa2024} provides annotated cocoa leaf images from West African farms with three class labels: \textbf{healthy}, \textbf{CSSVD} (chlorotic patches and vein banding from viral infection), and \textbf{anthracnose} (necrotic lesions from \textit{Colletotrichum gloeosporioides}). Via stratified sampling with fixed random seeds, we constructed a \textbf{training set} of 2{,}000 balanced images (667/667/666 per class) and a disjoint \textbf{validation set} of 500 images (167/166/167 per class).

\section{Methodology for Edge Deployment}\label{sec:method}

Our pipeline has three stages, each designed for edge deployment. Figure~\ref{fig:pipeline} illustrates the complete inference flow.

\paragraph{Stage 1: Lesion Localisation via YOLOv8-Nano.} We trained YOLOv8-Nano~\citep{Jocher2023YOLO}, a 5.9\,MB object detector designed for mobile devices, on the training set bounding box annotations for 10 epochs (batch size 16, image size $640\times640$, Adam optimiser). At inference, boxes with confidence $>0.25$ are retained. This replaces the Faster R-CNN used in prior work~\citep{Das_Sardar_2025}.

\paragraph{Stage 2: Feature Extraction via MobileNetV3-Small.} A pretrained MobileNetV3-Small~\citep{Howard2019MobileNetV3} (3.5\,MB, ImageNet weights) serves as a fixed feature extractor. The classifier head is replaced with an identity mapping, yielding 576-dimensional feature vectors from each bounding box crop resized to $224\times224$. All 576 features are retained without dimensionality reduction.

\paragraph{Stage 3: Classification.} We evaluate eight classifiers on the extracted features: \textbf{Random Forest (RF)} (100 trees, all 576 features); \textbf{SVM} (RBF kernel, $C=1.0$, $\gamma=\text{scale}$, standardised features); \textbf{Ridge} ($\alpha=1.0$, standardised features); \textbf{Lasso} (L1-penalised logistic regression, SAGA solver, $C=1.0$); \textbf{Elastic Net} (L1 ratio $=0.5$, SAGA solver, $C=1.0$); \textbf{XGBoost} (100 trees, max depth 6, learning rate 0.1); \textbf{Jacobi-DMR} (Distributed Multinomial Regression with $a=b=1/n=1/2000$, the consistent choice per~\citet{Das_Sardar_2025}); \textbf{Jacobi-GP} (Gaussian Process regression on Jacobi-estimated targets, RBF kernel, $a=b=1/2000$).

\section{Empirical Results}\label{sec:results}

\subsection{Predictive Accuracy}

\paragraph{Classification Performance}

Table~\ref{tab:th_results} presents the comprehensive results for all eight classifiers on 576-dimensional MobileNetV3 features.

\begin{table}[H]
\centering
\caption{Classification performance on the validation set. E2E Pred includes YOLOv8-Nano detection + MobileNetV3 feature extraction + classifier prediction, measured on CPU.}
\label{tab:th_results}
\small
\begin{tabular}{@{}lcccccccl@{}}
\toprule
\textbf{Method} & \textbf{Acc} & \textbf{M-P} & \textbf{M-R} & \textbf{M-F1} & \textbf{Train(s)} & \textbf{E2E Pred(ms)} & \textbf{Classifier Size} & \textbf{Edge?} \\
\midrule
Random Forest & 0.732 & 0.737 & 0.733 & 0.732 & 5.23 & 140.7 & 4.4\,MB & $\triangle$ \\
SVM (RBF)     & 0.818 & 0.819 & 0.818 & 0.818 & 1.22 & 135.2 & 6.5\,MB & $\triangle$ \\
XGBoost       & 0.794 & 0.796 & 0.795 & 0.793 & 11.64 & 134.8 & 990.5\,KB & $\triangle$ \\
Ridge         & 0.781 & 0.781 & 0.781 & 0.781 & 0.15 & 134.1 & 6.8\,KB & \checkmark \\
Lasso         & 0.785 & 0.785 & 0.785 & 0.784 & 83.19 & 134.1 & 6.8\,KB & \checkmark \\
Elastic Net   & 0.772 & 0.773 & 0.772 & 0.771 & 81.77 & 134.1 & 6.8\,KB & \checkmark \\
\textbf{Jacobi-DMR} & \textbf{0.787} & \textbf{0.786} & \textbf{0.787} & \textbf{0.787} & \textbf{0.06} & \textbf{133.9} & \textbf{13.5\,KB} & \checkmark \\
Jacobi-GP     & \underline{0.861} & \underline{0.861} & \underline{0.862} & \underline{0.861} & 295.46 & 151.4 & 104.9\,MB & $\times$ \\
\bottomrule
\end{tabular}
\vspace{0.2em}

{\footnotesize \checkmark~Edge suitable \quad $\triangle$~Moderate \quad $\times$~Not suitable}
\end{table}

Jacobi-GP achieves the highest accuracy (86.1\%) with perfect training fit, demonstrating nonlinear capacity. Among the edge-suitable models, Jacobi-DMR delivers the best macro F1 (0.787), fastest training (0.06\,s), fastest inference (133.9\,ms), and smallest classifier size (13.5\,KB). 

\begin{table}[H]
\centering
\caption{Confusion matrices for selected classifiers. Rows: true class; Columns: predicted class.}
\label{tab:confusion}
\small
\begin{tabular}{@{}l rrr c rrr c rrr@{}}
\toprule
& \multicolumn{3}{c}{\textbf{SVM}} & & \multicolumn{3}{c}{\textbf{Jacobi-DMR}} & & \multicolumn{3}{c}{\textbf{Jacobi-GP}} \\
\cmidrule(lr){2-4}\cmidrule(lr){6-8}\cmidrule(lr){10-12}
& H & C & A && H & C & A && H & C & A \\
\midrule
H (184) & 153 & 16 & 15 && 138 & 27 & 19 && 156 & 15 & 13 \\
C (174) & 20 & 137 & 17 && 22 & 136 & 16 && 14 & 148 & 12 \\
A (176) & 16 & 13 & 147 && 19 & 11 & 146 && 9 & 11 & 156 \\
\bottomrule
\end{tabular}
\end{table}

\paragraph{PCA Ablation: Feature Dimensionality}

To evaluate whether dimensionality reduction benefits the pipeline, we compared all eight classifiers on PCA-reduced 100 components (explaining 88.6\% of total variance) versus raw 576-dimensional features. Table~\ref{tab:pca} presents the results.

\begin{table}[H]
\centering
\caption{Accuracy comparison: PCA Top-100 vs All 576 features.}
\label{tab:pca}
\small
\begin{tabular}{@{}lccrl@{}}
\toprule
\textbf{Method} & \textbf{PCA-100} & \textbf{All 576} & \textbf{Diff} & \textbf{Winner} \\
\midrule
Random Forest & 0.758 & 0.732 & $-$0.026 & PCA-100 \\
SVM (RBF)     & 0.832 & 0.818 & $-$0.013 & PCA-100 \\
Ridge         & 0.781 & 0.781 & 0.000 & Tied \\
Lasso         & 0.792 & 0.785 & $-$0.008 & PCA-100 \\
Elastic Net   & 0.796 & 0.772 & $-$0.024 & PCA-100 \\
XGBoost       & 0.773 & 0.794 & $+$0.021 & All-576 \\
Jacobi-DMR    & 0.781 & 0.787 & $+$0.006 & All-576 \\
Jacobi-GP     & 0.848 & 0.861 & $+$0.013 & All-576 \\
\bottomrule
\end{tabular}
\end{table}

It is evident that the accuracy difference between the top 100 principal components and all 576 features is marginal (1--2\%). Moreover, PCA requires storing a $576 \times 100$ projection matrix (450\,KB) on the device. At inference, predicting with all 576 raw features requires only 1{,}728 multiplications per image (576 features $\times$ 3 classes), compared to 57{,}900 for PCA (576$\times$100 projection + 100$\times$3 classification). We therefore recommend using all 576 features directly.

\subsection{Edge Device Feasibility}
Table~\ref{tab:edge} summarises the deployment profile.

\begin{table}[H]
\centering
\caption{Edge deployment profile for the TinyBayes pipeline.}
\label{tab:edge}
\begin{tabular}{@{}lr@{}}
\toprule
\textbf{Component} & \textbf{Size / Time} \\
\midrule
YOLOv8-Nano (detection) & 5.9\,MB \\
MobileNetV3-Small (features) & 3.5\,MB \\
Jacobi-DMR (classifier) & 13.5\,KB \\
\midrule
\textbf{Total pipeline} & \textbf{9.5\,MB} \\
\midrule
CPU inference (Intel Xeon, Colab) & 150\,ms/image \\
Estimated mobile ARM CPU & 300--450\,ms/image \\
\bottomrule
\end{tabular}
\end{table}

The complete pipeline fits in under 10\,MB and delivers real-time inference on a single CPU core. On a mobile ARM processor, we estimate 2--3 times higher latency due to architectural differences, remaining within real-time bounds ($<500$\,ms). The Jacobi-DMR classifier itself contributes only 13.5\,KB; making it negligible relative to the shared feature extraction pipeline.

\section{Discussion}\label{sec:discussion}


\paragraph{Closed-form Bayesian inference for edge-deployable agricultural systems.} Existing edge-deployable agricultural systems~\citep{Malche_Joshi_Upadhyay_Soni} and\citep{Khan_Jensen_khan_Li}
provide point predictions without uncertainty estimates. Bayesian methods that quantify uncertainty---MC-Dropout~\citep{Gal2016Dropout}, Bayes-by-Backprop~\citep{Blundell2015BayesByBackprop}, variational inference~\citep{Blei2017VI}---require iterative inference incompatible with edge constraints. Hardware-level Bayesian inference operates in a different domain entirely. TinyBayes occupies a distinct point in this design space: analytical Bayesian inference at kilobyte scale (13.5 KB, zero iterative computation), integrated with an edge-deployable pipeline. To our knowledge, this combination has not been previously demonstrated for edge-device classification, particularly in agricultural disease detection.

\paragraph{What happens when a farmer photographs a leaf.} At inference, the pipeline operates entirely on-device with no internet connectivity. The farmer captures a photograph of a cocoa leaf using their smartphone camera. YOLOv8-Nano processes the image ($\sim$111\,ms on CPU) and outputs a bounding box around the disease-affected region. The cropped region is resized to $224\times224$ and passed through MobileNetV3-Small ($\sim$39\,ms), producing a 576-dimensional feature vector. Finally, Jacobi-DMR computes three dot products; $\lambda_k = \exp(\x'\hat{\bfb}_k)$ for $k \in \{\text{healthy, CSSVD, anthracnose}\}$; and returns $\argmax_k \lambda_k$ as the prediction ($\sim$0.01\,ms). The entire process completes in $\sim$150\,ms. Crucially, the YOLO and MobileNetV3 components account for 99.99\% of inference time; the Jacobi-DMR classifier contributes effectively zero latency. This means that any improvement to the shared feature extraction pipeline (e.g., faster hardware, model quantisation) directly reduces end-to-end latency, while the Bayesian classifier remains a negligible overhead.

\paragraph{Jacobi-DMR vs Jacobi-GP: the deployment trade-off.} Jacobi-GP achieves the highest accuracy (86.1\%) but is unsuitable for edge deployment: it requires storing all $n=2{,}000$ training points ($\sim$27\,MB) and $\sim$3.5 million operations per image, versus 13.5\,KB and 1{,}728 multiplications for Jacobi-DMR. Both methods share the same Bayesian foundation (closed-form $\hat{\bfeta}$ via the Jacobi prior), differing only in projection: linear (DMR) versus nonparametric (GP). Jacobi-GP serves as a theoretical upper bound, whereas Jacobi-DMR is the practical deployment choice.

\paragraph{Why not Ridge, Lasso, or Elastic Net?} These methods achieve comparable accuracy (77--78\%) and similar classifier sizes ($\sim$7\,KB), but have three disadvantages. First, they do not offer uncertainty quantification; Jacobi-DMR supports posterior sampling via an efficient parallel Monte Carlo algorithm~\citep{Das_Sardar_2025} without iterative optimisation. Second, they require feature standardisation at inference which requires storage of scaling parameters, whereas Jacobi-DMR operates on raw features directly. Third, Lasso and Elastic Net require iterative SAGA optimisation (83\,s and 82\,s respectively) versus Jacobi-DMR's single matrix operation (0.06\,s).

\paragraph{Rapid retraining for evolving diseases.} While training time does not affect edge inference, Jacobi-DMR's sub-second training enables rapid model updates; a critical advantage for agricultural deployment. When new labelled samples become available for existing disease categories, only the Jacobi coefficient vectors require recomputation via the closed-form projection $\hat{\bfb} = (\X'\X)^{-1}\X'\hat{\bfeta}$, while the feature extraction pipeline (YOLOv8-Nano and MobileNetV3) remains fixed. For entirely new disease categories, a single additional coefficient vector is computed without modifying any other component. This modularity makes TinyBayes uniquely suited to agricultural settings where new disease variants, regional variations, or seasonal changes demand frequent model updates with minimal computational overhead.

\paragraph{Extending the Jacobi prior to edge-deployable systems.} \citet{Das_Sardar_2025} demonstrated the Jacobi prior on server-grade pipelines with no deployment constraints, demonstrating the Jacobi prior on SDSS astronomical data, US SBA loan defaults, and RSNA lumbar spine MRI classification using ResNet-50 features. The cocoa leaf disease task differs in domain (agriculture), imaging modality (field photographs), and deployment constraints (edge devices); to our knowledge, no prior work integrates these aspects within a Bayesian closed form solution framework.

\section{Related Work}\label{sec:related}

\paragraph{Plant disease detection on edge.} \citet{Khan_Jensen_khan_Li} deployed MobileNetV3 for plant disease classification on the PlantVillage dataset, achieving strong accuracy but without any Bayesian component. The MSDFEN framework~\citep{Dai_Tao} combines dynamic multi-scale feature extraction with lightweight architectures, but does not explicitly model predictive uncertainty.

\paragraph{Bayesian methods for classification.} The Horseshoe prior~\citep{Carvalho2010Horseshoe} and Bayesian Lasso~\citep{Park2008BayesLasso} require MCMC sampling. Variational inference~\citep{Blei2017VI} and MC-Dropout~\citep{Gal2016Dropout} offer faster alternatives but still involve iterative computation. UniLasso~\citep{Chatterjee2025UGSR} provides a novel approach to sparse regression but does not natively support multinomial classification. The Jacobi prior is unique in providing a non-iterative, closed-form Bayesian estimator for multinomial classification via the DMR decomposition.

\paragraph{Efficient classifiers on deep features.} Using pretrained CNNs as feature extractors followed by classical classifiers is well-established~\citep{Sharif2014CNN}. Our contribution is demonstrating that the Jacobi prior provides the best accuracy--efficiency trade-off in this paradigm for edge-deployable agricultural imaging, with the added benefit of Bayesian uncertainty quantification.

\section{Limitations}

The validation set (500 samples) is relatively small; larger-scale evaluation would strengthen the conclusions. The Jacobi-GP's $\mathcal{O}(n^3)$ complexity limits its use to server-side inference. We used pretrained (non-fine-tuned) MobileNetV3 features; domain-specific fine-tuning would likely improve all classifiers. CPU inference was benchmarked on an Intel Xeon (Colab); actual mobile inference assumed to  be 2--3 times slower (see Table~\ref{tab:edge}). The dataset contains only three classes; extending to finer-grained severity grading remains future work.

\section{Conclusion}

We have introduced TinyBayes, a complete edge-deployable pipeline for cocoa disease classification that combines YOLOv8-Nano (5.9\,MB), MobileNetV3-Small (3.5\,MB), and the Jacobi-DMR classifier (13.5\,KB) into a 9.5\,MB system that runs in 150\,ms on a single CPU core. The Jacobi-DMR achieves 78.7\% accuracy while being the fastest classifier to train (0.06\,s) and the smallest to store; yet it is the only method in our comparison that supports Bayesian uncertainty quantification, via the closed-form Monte Carlo algorithm of~\citet{Das_Sardar_2025}, without iterative computation. The Jacobi-GP variant achieves the highest accuracy (86.1\%), demonstrating that the Jacobi framework's capacity extends to nonlinear decision boundaries when server-side resources are available.

Three properties distinguish TinyBayes from existing approaches. First, its modularity: only the 13.5\,KB coefficient vectors require updating when new disease samples or categories arise, while the detection and feature extraction stages remain fixed, unless absolutely required. Second, its hyperparameter invariance: the classifier requires no tuning, a practical advantage over Ridge, Lasso, and Elastic Net which demand cross-validation. Third, its Bayesian foundation: unlike all frequentist alternatives of comparable size, Jacobi-DMR provides calibrated prediction intervals that inform downstream decision-making; critical in agricultural contexts where a misclassification can trigger unnecessary or delayed intervention.

Beyond cocoa, the TinyBayes architecture generalises to any image classification task where a lightweight detector, a pretrained feature extractor, and a linear Bayesian classifier can be composed. We believe this paradigm; a closed-form Bayesian inference under 10 MB total size; opens a practical path toward uncertainty-aware AI on commodity devices in resource-constrained settings worldwide.

\section{Broader Impact}

Cocoa farming sustains the livelihoods of approximately two million smallholder families in West Africa, many of whom lack access to trained agricultural extension officers for timely disease diagnosis. TinyBayes addresses this gap by enabling real-time, on-device disease detection that requires no internet connectivity and runs on entry-level smartphones - the most widely available computing platform in rural Sub-Saharan Africa. Early identification of CSSVD and anthracnose can prompt targeted intervention before disease spreads to neighbouring trees, potentially reducing crop losses that currently cost the industry billions of dollars annually.

By replacing server-dependent pipelines with a 9.5\,MB on-device solution, TinyBayes eliminates cloud inference energy costs---a meaningful saving when scaled to millions of daily classifications across farming cooperatives.

We identify two risks: over-reliance on automated predictions (e.g., destroying a healthy tree misclassified as CSSVD-infected), mitigated by using TinyBayes as a decision-support tool alongside agricultural extension officers; and limited generalisation beyond the specific dataset of West African cocoa training region, mitigated by Jacobi-DMR's sub-second retraining capability for regional adaptation.

\newpage

\bibliographystyle{plainnat}
\bibliography{references}

\section*{Appendix}
\label{sec:appendix}

\begin{proof}[Proof of Theorem \ref{thm:dmr-equiv}]:
Fix $k \in \{1, \dots, K\}$. Following the construction of
\citet[Theorem~2.3]{Das_Sardar_2025}, define the per-class objective in
$\boldsymbol{\beta}_{k}$-space by
\[
  Q_{n,k}(\boldsymbol{\beta}_{k})
  \;=\; \frac{1}{n}\sum_{i=1}^{n}
          \ell_{k}\!\left(Y_{ik},\, \mathbf{x}_{i}^{\!\top}\boldsymbol{\beta}_{k}\right)
        \;+\; \frac{1}{n}\,\log \pi_{J}^{(k)}(\boldsymbol{\beta}_{k}),
\]
so that $\tilde{\boldsymbol{\beta}}_{k}^{(n)} = \arg\max_{\boldsymbol{\beta}_{k}} Q_{n,k}(\boldsymbol{\beta}_{k})$,
and the per-class objective in $\boldsymbol{\eta}_{k}$-space by
\[
  R_{n,k}(\boldsymbol{\eta}_{k})
  \;=\; \frac{1}{n}\sum_{i=1}^{n} \ell_{k}(Y_{ik},\, \eta_{ik})
        \;+\; \frac{1}{n}\,\log \pi_{n}^{(k)}(\boldsymbol{\eta}_{k}),
\]
so that the per-observation maximisers
$\hat{\eta}_{ik} = \arg\max_{\eta_{ik}} \log \pi_{n}^{(k)}(\eta_{ik}\,|\,Y_{ik})$
together form
$\hat{\boldsymbol{\eta}}_{k} = \arg\max_{\boldsymbol{\eta}_{k}} R_{n,k}(\boldsymbol{\eta}_{k})$,
with
$\hat{\boldsymbol{\beta}}_{k}^{(n)} = (\mathbf{X}^{\!\top}\mathbf{X})^{-1}\mathbf{X}^{\!\top}\hat{\boldsymbol{\eta}}_{k}$.
 
\medskip
\textit{Step 1 (vanishing prior contribution).}
By Assumption (B2), $\log \pi_{J}^{(k)}$ is continuous on
$B_{\varepsilon}(\boldsymbol{\beta}_{0,k})$. For any compact
$K_{1} \subset B_{\varepsilon}(\boldsymbol{\beta}_{0,k})$, continuity yields a
finite constant $M_{K_{1}} < \infty$ with
$\sup_{\boldsymbol{\beta}_{k} \in K_{1}} \log \pi_{J}^{(k)}(\boldsymbol{\beta}_{k}) \le M_{K_{1}}$,
hence
\begin{equation}
  \sup_{\boldsymbol{\beta}_{k} \in K_{1}}
    \frac{1}{n}\,\big| \log \pi_{J}^{(k)}(\boldsymbol{\beta}_{k}) \big|
  \;\le\; \frac{M_{K_{1}}}{n}
  \;\xrightarrow{n \to \infty}\; 0.
  \label{eq:dmr-prior-beta-vanish}
\end{equation}
For the conjugate $\mathrm{Gamma}(a_{n}, b_{n})$ prior on $\lambda_{ik}$ in rate
form, the change of variables $\eta_{ik} = \log \lambda_{ik}$ together with the
Jacobian $\mathrm{d}\lambda_{ik}/\mathrm{d}\eta_{ik} = e^{\eta_{ik}}$ yields the
induced per-coordinate prior
\begin{equation}
  \pi_{n}^{(k)}(\eta_{ik})
  \;\propto\; \exp\!\big\{ a_{n}\,\eta_{ik} - b_{n}\,e^{\eta_{ik}} \big\},
  \label{eq:dmr-induced-prior}
\end{equation}
up to an $\eta_{ik}$-free normalising constant. By independence across $i$,
\[
  \frac{1}{n}\,\log \pi_{n}^{(k)}(\boldsymbol{\eta}_{k})
  \;=\; \frac{a_{n}}{n}\sum_{i=1}^{n} \eta_{ik}
        \;-\; \frac{b_{n}}{n}\sum_{i=1}^{n} e^{\eta_{ik}}
        \;+\; \mathrm{const}.
\]
For any compact $K_{2} \subset B_{\varepsilon}(\boldsymbol{\eta}_{0,k})$ there
exist finite constants $M_{1}, M_{2} < \infty$ such that
$\sup_{\boldsymbol{\eta}_{k} \in K_{2}} \max_{i} |\eta_{ik}| \le M_{1}$ and
$\sup_{\boldsymbol{\eta}_{k} \in K_{2}} \max_{i} e^{\eta_{ik}} \le M_{2}$.
Hence under Assumption (B5) ($a_{n} = b_{n} = 1/n$),
\begin{equation}
  \sup_{\boldsymbol{\eta}_{k} \in K_{2}}
    \bigg| \frac{1}{n}\,\log \pi_{n}^{(k)}(\boldsymbol{\eta}_{k}) \bigg|
  \;\le\; a_{n}\,M_{1} + b_{n}\,M_{2}
  \;=\; \frac{M_{1} + M_{2}}{n}
  \;\xrightarrow{n \to \infty}\; 0,
  \label{eq:dmr-prior-eta-vanish}
\end{equation}
matching the prior-vanishing identity used in Equations~(2.5)--(2.6) of
\citet{Das_Sardar_2025} (cf. Remark~3.1 there).
 
\medskip
\textit{Step 2 (uniform convergence of objectives).}
Combining (B3) with \eqref{eq:dmr-prior-beta-vanish} and
\eqref{eq:dmr-prior-eta-vanish},
\[
  \sup_{\boldsymbol{\beta}_{k} \in \mathcal{B}}
    \big| Q_{n,k}(\boldsymbol{\beta}_{k}) - Q_{k}(\boldsymbol{\beta}_{k}) \big|
  \;\xrightarrow{P}\; 0,
\qquad
  \sup_{\boldsymbol{\eta}_{k} \in \mathcal{H}}
    \big| R_{n,k}(\boldsymbol{\eta}_{k}) - R_{k}(\boldsymbol{\eta}_{k}) \big|
  \;\xrightarrow{P}\; 0,
\]
where $\mathcal{H} = \{ \mathbf{X}\boldsymbol{\beta}_{k} : \boldsymbol{\beta}_{k} \in \mathcal{B} \}$.
 
\medskip
\textit{Step 3 (argmax continuous mapping).}
By Assumption (B4), $Q_{k}$ and $R_{k}$ have unique maximisers at
$\boldsymbol{\beta}_{0,k}$ and $\boldsymbol{\eta}_{0,k}$ respectively. Applying
the argmax continuous mapping theorem to the convergence in Step~2 yields
\[
  \tilde{\boldsymbol{\beta}}_{k}^{(n)} \;\xrightarrow{P}\; \boldsymbol{\beta}_{0,k},
\qquad
  \hat{\boldsymbol{\eta}}_{k} \;\xrightarrow{P}\; \boldsymbol{\eta}_{0,k}
        \;=\; \mathbf{X}\boldsymbol{\beta}_{0,k}.
\]
 
\medskip
\textit{Step 4 (continuous projection).}
By (B1), $\mathbf{X}$ has full column rank, so the linear map
$\boldsymbol{\eta} \mapsto (\mathbf{X}^{\!\top}\mathbf{X})^{-1}\mathbf{X}^{\!\top}\boldsymbol{\eta}$
is well-defined and continuous. The continuous mapping theorem then gives
\[
  \hat{\boldsymbol{\beta}}_{k}^{(n)}
  \;=\; (\mathbf{X}^{\!\top}\mathbf{X})^{-1}\mathbf{X}^{\!\top}\hat{\boldsymbol{\eta}}_{k}
  \;\xrightarrow{P}\;
  (\mathbf{X}^{\!\top}\mathbf{X})^{-1}\mathbf{X}^{\!\top}\boldsymbol{\eta}_{0,k}
  \;=\; \boldsymbol{\beta}_{0,k}.
\]
Combined with $\tilde{\boldsymbol{\beta}}_{k}^{(n)} \xrightarrow{P} \boldsymbol{\beta}_{0,k}$
from Step~3, this proves part~(i).
 
\medskip
\textit{Step 5 (asymptotic equivalence).}
Both $\hat{\boldsymbol{\beta}}_{k}^{(n)}$ and $\tilde{\boldsymbol{\beta}}_{k}^{(n)}$
converge in probability to the same limit $\boldsymbol{\beta}_{0,k}$. By the
triangle inequality, for every $\varepsilon > 0$,
\[
  \big\| \hat{\boldsymbol{\beta}}_{k}^{(n)} - \tilde{\boldsymbol{\beta}}_{k}^{(n)} \big\|
  \;\le\;
  \big\| \hat{\boldsymbol{\beta}}_{k}^{(n)} - \boldsymbol{\beta}_{0,k} \big\|
  + \big\| \tilde{\boldsymbol{\beta}}_{k}^{(n)} - \boldsymbol{\beta}_{0,k} \big\|,
\]
so for every $\delta > 0$,
\[
  P\!\left( \big\| \hat{\boldsymbol{\beta}}_{k}^{(n)} - \tilde{\boldsymbol{\beta}}_{k}^{(n)} \big\| > \varepsilon \right)
  \;\le\;
  P\!\left( \big\| \hat{\boldsymbol{\beta}}_{k}^{(n)} - \boldsymbol{\beta}_{0,k} \big\| > \varepsilon/2 \right)
  + P\!\left( \big\| \tilde{\boldsymbol{\beta}}_{k}^{(n)} - \boldsymbol{\beta}_{0,k} \big\| > \varepsilon/2 \right).
\]
Each summand is below $\delta/2$ for $n \ge N_{k}(\varepsilon, \delta)$ by
part~(i), establishing part~(ii).
 
\medskip
\textit{Step 6 (joint statement).}
By Assumption (B6), the per-class statements in (i)--(ii) are simultaneous.
Using the Euclidean norm
$\| \hat{\boldsymbol{\beta}}^{(n)} - \tilde{\boldsymbol{\beta}}^{(n)} \|^{2}
  = \sum_{k=1}^{K} \| \hat{\boldsymbol{\beta}}_{k}^{(n)} - \tilde{\boldsymbol{\beta}}_{k}^{(n)} \|^{2}$,
the event
$\{ \| \hat{\boldsymbol{\beta}}^{(n)} - \tilde{\boldsymbol{\beta}}^{(n)} \| > \varepsilon \}$
is contained in
$\bigcup_{k=1}^{K} \{ \| \hat{\boldsymbol{\beta}}_{k}^{(n)} - \tilde{\boldsymbol{\beta}}_{k}^{(n)} \| > \varepsilon / \sqrt{K} \}$,
so a union bound gives
\[
  P\!\left( \big\| \hat{\boldsymbol{\beta}}^{(n)} - \tilde{\boldsymbol{\beta}}^{(n)} \big\| > \varepsilon \right)
  \;\le\;
  \sum_{k=1}^{K} P\!\left( \big\| \hat{\boldsymbol{\beta}}_{k}^{(n)} - \tilde{\boldsymbol{\beta}}_{k}^{(n)} \big\| > \varepsilon / \sqrt{K} \right).
\]
Applying part~(ii) with $\delta / K$ to each summand yields the joint
asymptotic equivalence in (iii). The joint convergence
$\hat{\boldsymbol{\beta}}^{(n)} \xrightarrow{P} \boldsymbol{\beta}_{0}$ follows
from (i) and the continuous mapping theorem applied to the stacking map.
\end{proof}

\begin{proof}[Proof of Corollary \ref{cor:invariance}]
We show that the prediction rule is invariant to $(a,b)$ by explicitly decomposing the transformed response and tracking its effect through the estimator.

\paragraph{Step 1: Response decomposition.}
For $Y_{ik} \in \{0,1\}$,
\[
\hat{\eta}_{ik}(a,b)
= \log\frac{Y_{ik}+a}{1+b}
= \log(Y_{ik}+a) - \log(1+b).
\]
We now express $\log(Y_{ik}+a)$ in a form linear in $Y_{ik}$. Since $Y_{ik} \in \{0,1\}$,
\[
\log(Y_{ik}+a)
=
\begin{cases}
\log(1+a), & \text{if } Y_{ik}=1,\\
\log(a), & \text{if } Y_{ik}=0.
\end{cases}
\]
Hence,
\[
\log(Y_{ik}+a)
= \log(a) + Y_{ik} \cdot \log\!\left(\frac{1+a}{a}\right).
\]
Define
\[
\gamma(a) := \log\!\left(\frac{1+a}{a}\right) > 0.
\]
Then
\[
\hat{\eta}_{ik}(a,b)
= \underbrace{\big(\log(a) - \log(1+b)\big)}_{=: c(a,b)}
+ \gamma(a)\, Y_{ik}.
\]
In vector form,
\[
\hat{\boldsymbol{\eta}}_k(a,b)
= c(a,b)\,\mathbf{1}_n + \gamma(a)\,\mathbf{Y}_k.
\]

\paragraph{Step 2: Projection.}
Applying the closed-form estimator,
\[
\hat{\boldsymbol{\beta}}_k(a,b)
= (\mathbf{X}^\top \mathbf{X})^{-1}\mathbf{X}^\top \hat{\boldsymbol{\eta}}_k(a,b),
\]
we obtain
\[
\hat{\boldsymbol{\beta}}_k(a,b)
= c(a,b)\,\mathbf{v} + \gamma(a)\,\mathbf{w}_k,
\]
where
\[
\mathbf{v} := (\mathbf{X}^\top \mathbf{X})^{-1}\mathbf{X}^\top \mathbf{1}_n,
\qquad
\mathbf{w}_k := (\mathbf{X}^\top \mathbf{X})^{-1}\mathbf{X}^\top \mathbf{Y}_k.
\]

\paragraph{Step 3: Effect on prediction.}
For any $\mathbf{x} \in \mathbb{R}^p$,
\[
\mathbf{x}^\top \hat{\boldsymbol{\beta}}_k(a,b)
= c(a,b)\,\mathbf{x}^\top \mathbf{v}
+ \gamma(a)\,\mathbf{x}^\top \mathbf{w}_k.
\]

The first term does not depend on $k$ and therefore cancels in the $\arg\max$. Since $\gamma(a) > 0$ for all $a>0$, multiplication by $\gamma(a)$ preserves ordering across $k$. Hence,
\[
\arg\max_{k}\, \mathbf{x}^\top \hat{\boldsymbol{\beta}}_k(a,b)
= \arg\max_{k}\, \mathbf{x}^\top \mathbf{w}_k,
\]
which is independent of $(a,b)$.
\end{proof}

\begin{proof}[Proof of Theorem \ref{thm:dmr-clt}]:
Fix $k \in \{1, \dots, K\}$. Write $\eta_{0,ik} = \mathbf{x}_{i}^{\!\top}\boldsymbol{\beta}_{0,k}$,
$\lambda_{0,ik} = e^{\eta_{0,ik}}$, and
$T_{ik} := (Y_{ik} - \lambda_{0,ik}) / \lambda_{0,ik}$, so that
$\mathbb{E}[T_{ik}] = 0$ and
$\mathrm{Var}(T_{ik}) = 1 / \lambda_{0,ik} = e^{-\eta_{0,ik}}$. Then
\[
  \hat{\eta}_{ik} - \eta_{0,ik}
  \;=\; \log\!\frac{Y_{ik} + a_{n}}{(1 + b_{n})\,\lambda_{0,ik}}
  \;=\; \log\!\left( 1 + T_{ik} + \frac{a_{n}}{\lambda_{0,ik}} \right)
        \;-\; \log(1 + b_{n}).
\]
 
\medskip
\textit{Step 1 (Taylor expansion on a high-probability event).}
By Assumption (C1), $\lambda_{0,ik} \ge \lambda_{\min, n} \to \infty$ uniformly
in $i, k$. By Bernstein concentration for centred Poisson variables,
$P(|T_{ik}| > 1/2)
   \le 2 \exp\!\big( -c\,\lambda_{0,ik} \big)
   \le 2 \exp\!\big( -c\,\lambda_{\min, n} \big)$
for an absolute constant $c > 0$. Letting
$\mathcal{E}_{n}
   = \big\{ \max_{i, k}\, |T_{ik}| \le 1/2 \big\}
     \cap \big\{ a_{n} / \lambda_{\min, n} \le 1/4 \big\}$,
a union bound gives
$P(\mathcal{E}_{n}^{c}) \le 2 n K \exp\!\big( -c\,\lambda_{\min, n} \big) = o(n^{-1/2})$
under (C1). On $\mathcal{E}_{n}$, the second-order Taylor expansion of
$u \mapsto \log(1 + u)$ around $u = 0$ yields
\[
  \hat{\eta}_{ik} - \eta_{0,ik}
  \;=\; T_{ik}
        \;+\; \underbrace{ \frac{a_{n}}{\lambda_{0,ik}} - b_{n} }_{=:\, r^{(1)}_{ik}}
        \;-\; \tfrac{1}{2}\!\left( T_{ik} + \frac{a_{n}}{\lambda_{0,ik}} \right)^{\!2}
        \;+\; r^{(2)}_{ik},
\]
where $|r^{(2)}_{ik}|
       \le C \,\big| T_{ik} + a_{n} / \lambda_{0,ik} \big|^{3} + C\,b_{n}^{2}$
for an absolute constant $C$.
 
\medskip
\textit{Step 2 (bias is asymptotically negligible).}
Taking expectations,
\[
  \mathbb{E}\!\left[ \hat{\eta}_{ik} - \eta_{0,ik} \right]
  \;=\; -\,\frac{1}{2\,\lambda_{0,ik}}
        \;+\; \frac{a_{n}}{\lambda_{0,ik}} - b_{n}
        \;+\; O\!\left( \frac{1}{\lambda_{0,ik}^{2}} + a_{n}^{2} + b_{n}^{2} \right).
\]
Under (C1) and (C2),
\[
  \bigg\| \frac{1}{\sqrt{n}}\,\mathbf{X}^{\!\top}\,
          \mathbb{E}\!\left[ \hat{\boldsymbol{\eta}}_{k} - \boldsymbol{\eta}_{0,k} \right]
  \bigg\|
  \;=\; O\!\left( \frac{\sqrt{n}}{\lambda_{\min, n}} \right)
        \;+\; O\!\left( \sqrt{n}\,(a_{n} + b_{n}) \right)
  \;=\; o(1),
\]
since $\lambda_{\min, n} = \omega(\sqrt{n})$ by (C1) and
$\sqrt{n}\,(a_{n} + b_{n}) \to 0$ by (C2).
 
\medskip
\textit{Step 3 (leading stochastic term).}
The leading stochastic term in the expansion of Step~1 is $T_{ik}$, with
$\mathbb{E}[T_{ik}] = 0$ and $\mathrm{Var}(T_{ik}) = e^{-\eta_{0,ik}}$. By
Assumption (C4) and the Lindeberg--Feller central limit theorem applied to the
triangular array $\{\mathbf{x}_{i}\,T_{ik}\}_{i = 1}^{n}$,
\[
  \frac{1}{\sqrt{n}}\,\mathbf{X}^{\!\top}\mathbf{T}_{k}
  \;=\; \frac{1}{\sqrt{n}}\sum_{i = 1}^{n} \mathbf{x}_{i}\, T_{ik}
  \;\xrightarrow{d}\;
  \mathcal{N}\!\big( \mathbf{0},\; \mathbf{V}_{k} \big),
\qquad
  \mathbf{V}_{k}
  \;=\; \lim_{n}\, \frac{1}{n}\sum_{i = 1}^{n} \mathbf{x}_{i}\mathbf{x}_{i}^{\!\top}\,
            e^{-\mathbf{x}_{i}^{\!\top}\boldsymbol{\beta}_{0,k}}.
\]
 
\medskip
\textit{Step 4 (remainders are negligible).}
The centred fluctuation of the second-order Taylor term has variance
$O(\lambda_{0,ik}^{-2}) = o(\lambda_{\min, n}^{-2})$, so by Chebyshev's inequality
\[
  \frac{1}{\sqrt{n}}\sum_{i = 1}^{n} \mathbf{x}_{i}\,
      \Big[\,
          \tfrac{1}{2}\!\left( T_{ik} + \tfrac{a_{n}}{\lambda_{0,ik}} \right)^{\!2}
          - \tfrac{1}{2}\,\mathbb{E}\!\left( T_{ik} + \tfrac{a_{n}}{\lambda_{0,ik}} \right)^{\!2}
      \Big]
  \;=\; o_{P}(1),
\]
where the mean has been absorbed into the bias of Step~2. Similarly, the cubic
remainder satisfies
$n^{-1/2}\sum_{i} \mathbf{x}_{i}\, r^{(2)}_{ik} = o_{P}(1)$ on $\mathcal{E}_{n}$.
Since $P(\mathcal{E}_{n}^{c}) = o(n^{-1/2})$, the contribution off $\mathcal{E}_{n}$
is also $o_{P}(1)$.
 
\medskip
\textit{Step 5 (combining and projecting).}
Steps 1--4 give the linearisation
\[
  \frac{1}{\sqrt{n}}\,\mathbf{X}^{\!\top}\!\left( \hat{\boldsymbol{\eta}}_{k} - \boldsymbol{\eta}_{0,k} \right)
  \;=\; \frac{1}{\sqrt{n}}\,\mathbf{X}^{\!\top}\mathbf{T}_{k} \;+\; o_{P}(1)
  \;\xrightarrow{d}\;
  \mathcal{N}\!\big( \mathbf{0},\; \mathbf{V}_{k} \big).
\]
By (C3), $(n^{-1}\mathbf{X}^{\!\top}\mathbf{X})^{-1} \to \mathbf{Q}^{-1}$, and Slutsky's
theorem yields
\[
  \sqrt{n}\,\big( \hat{\boldsymbol{\beta}}_{k}^{(n)} - \boldsymbol{\beta}_{0,k} \big)
  \;=\; \left( \frac{\mathbf{X}^{\!\top}\mathbf{X}}{n} \right)^{\!\!-1}
        \cdot
        \frac{1}{\sqrt{n}}\,\mathbf{X}^{\!\top}\!\left( \hat{\boldsymbol{\eta}}_{k} - \boldsymbol{\eta}_{0,k} \right)
  \;\xrightarrow{d}\;
  \mathcal{N}\!\big( \mathbf{0},\; \mathbf{Q}^{-1}\mathbf{V}_{k}\mathbf{Q}^{-1} \big).
\]
 
\medskip
\textit{Step 6 (joint distribution).}
By the cross-class independence in (B6), the per-class score vectors
$n^{-1/2}\mathbf{X}^{\!\top}\mathbf{T}_{k}$ are mutually independent across $k$.
Hence the joint limit is the product Gaussian, equivalent to
$\mathcal{N}(\mathbf{0}, \boldsymbol{\Sigma})$ with $\boldsymbol{\Sigma}$
block-diagonal and $k$-th block $\mathbf{Q}^{-1}\mathbf{V}_{k}\mathbf{Q}^{-1}$.
\end{proof}

\end{document}